\documentclass{opt2025}

\usepackage{bbm}
\usepackage{caption}

\usepackage{booktabs}
\usepackage{array}
\usepackage{xspace}
\usepackage{float}

\newcommand{\R}{\mathbb{R}}

\newcommand{\Tr}{\mathrm{Tr}}
\newcommand{\norm}[1]{\left\lVert #1 \right\rVert}

\newcommand{\Ghat}{\widehat{G}}

\newcommand{\condproxy}{\rho_{\text{cond}}}
\newcommand{\momentproxy}{\rho_{\text{moment}}}
\newcommand{\cmr}{\textsc{CMR}\xspace}

\title[Chebyshev Moment Regularization]{Chebyshev Moment Regularization (CMR): Condition-Number Control with Moment Shaping}

\optauthor{%
  \Name{Jinwoo Baek}\Email{baekji@oregonstate.edu}\\
  \addr School of Electrical Engineering and Computer Science\\
  Department of Mathematics\\
  Oregon State University\\
  Corvallis, OR 97331, USA
}

\begin{document}
\maketitle

\begin{abstract}
We introduce \textbf{Chebyshev Moment Regularization (CMR)}, a simple, architecture-agnostic loss that directly optimizes layer spectra. CMR jointly controls spectral edges via a log-condition proxy and shapes the interior via Chebyshev moments, with a decoupled, capped mixing rule that preserves task gradients. We prove strictly monotone descent for the condition proxy, bounded moment gradients, and orthogonal invariance. In an adversarial ``$\kappa$-stress'' setting (MNIST, 15-layer MLP), \emph{compared to vanilla training}, CMR reduces mean layer condition numbers by $\sim\!10^3$ (from $\approx3.9\!\times\!10^3$ to $\approx3.4$ in 5 epochs), increases average gradient magnitude, and restores test accuracy ( $\approx10\%\!\to\!\approx86\%$ ). These results support \textbf{optimization-driven spectral preconditioning}: directly steering models toward well-conditioned regimes for stable, accurate learning.
\end{abstract}

\section{Introduction}
Training very deep networks is brittle: layer spectra often turn pathological—$\sigma_{\min}$ collapses, $\sigma_{\max}$ inflates, and condition numbers explode—yielding poorly scaled gradients and stalled optimization. Common fixes such as residual connections and normalization~\citep{He2016deep,Ioffe2015batch} or careful initialization~\citep{Glorot2010understanding} help indirectly; they do not directly optimize the spectral geometry that governs numerical stability.

\noindent We propose \textbf{Chebyshev Moment Regularization (\cmr)}, a drop-in loss that treats the spectrum as a first-class training signal. \cmr couples a \emph{log-condition} proxy that targets the spectral edges ($\sigma_{\max},\sigma_{\min}$) with \emph{Chebyshev moments} of a normalized Gram operator that shape the interior distribution. A decoupled, \emph{capped mixing} rule scales spectral gradients relative to task gradients, preserving task signal while keeping spectral intervention bounded. Unlike spectral-norm regularization, which \emph{constrains} only the operator norm $\|W\|_2$~\citep{Miyato2018spectral}, or orthogonality/Parseval penalties that drive layers toward (near-)isometries~\citep{Cisse2017parseval}, \cmr provides fine-grained, orthogonally invariant control over both edges and mass with minimal implementation burden.

\paragraph{Contributions.}
\begin{itemize}
\item \textbf{Method.} \cmr is a lightweight, drop-in loss that directly reshapes layer spectra—controlling the edges and smoothing the interior—while mixing with task gradients under a small cap to preserve training dynamics; applicable across architectures and optimizers in formulation.
\item \textbf{Theory.} (a) Gradient flow on $\condproxy$ satisfies a strict descent identity (monotonic conditioning improvement); (b) moment gradients are bounded and scale-friendly; (c) the penalty is invariant under $QWR$ for orthogonal $Q,R$.
\item \textbf{Practice.} In adversarial ``$\kappa$-stress,'' \cmr reconditions layers by orders of magnitude while \emph{increasing} average gradient norms, restoring trainability and substantially improving test accuracy relative to vanilla training.
\end{itemize}

\section{Method: Chebyshev Moment Regularization}
\label{sec:method}
Let $W\in\R^{m\times n}$ be a layer and $G=W^\top W$. Define
\[
\condproxy(W) = \tfrac12\Big(\log\norm{W}_2^2 - \log\lambda_{\min}(W^\top W + \epsilon I)\Big)
= \log\sigma_{\max}(W) - \tfrac12 \log(\sigma_{\min}^2(W)+\epsilon),\label{eq:condproxy}
\]
\[
\momentproxy(W) = \sum_{k=3}^{K} w_k\, s_k(W)^2,\qquad
s_k(W)=\tfrac1n \Tr\!\big(T_k(\Ghat)\big),\label{eq:momentproxy}
\]
where $\epsilon>0$, $w_k=\exp(\beta(k-3))$, $n=\text{cols}(W)$, and $\Ghat=\frac{G-cI}{d}$ with
$c=\frac{\lambda_{\max}(G)+\lambda_{\min}(G)}{2}$ and $d=\max\{\frac{\lambda_{\max}(G)-\lambda_{\min}(G)}{2},\epsilon\}$ so that $\sigma(\Ghat)\subseteq[-1,1]$. The training objective is
\[
\mathcal{L}(\theta)
= \mathcal{L}_{\text{task}}(\theta)\;+\;\lambda \sum_{\ell=1}^L \left(\alpha_1\,\condproxy(W^{(\ell)}) + \alpha_2\,\momentproxy(W^{(\ell)})\right).
\]
\paragraph{On using $k\ge3$ moments.}
We use $k\!\ge\!3$ because $s_0,s_1,s_2$ encode mass/mean/variance of the normalized spectrum and are largely determined by the edge normalization and the condition proxy (App.~\ref{app:k3-moments}).

\paragraph{Decoupled, capped mixing.}
We backprop the task loss to get $g_{\text{task}}$, then backprop the spectral penalty to get $g_{\text{spec}}$.
Let $\rho_{\text{spec}}\!\in(0,1]$ and set
\(
\widetilde g_{\text{spec}}
= \min\!\Big\{1, \tfrac{\rho_{\text{spec}}\norm{g_{\text{task}}}}{\norm{g_{\text{spec}}}}\Big\}\, g_{\text{spec}},
\)
and update with $g=\widetilde g_{\text{spec}}+g_{\text{task}}$.
This preserves task signal while enforcing \emph{bounded spectral intervention}. We use a small stabilizer $\delta=10^{-12}$ in the cap denominator.

\begin{algorithm2e}[H]
\caption{\cmr-SGD with decoupled, capped spectral gradients}
\KwIn{model $\theta$, weights $(\lambda,\alpha_1,\alpha_2)$, cap $\rho_{\text{spec}}$, warmup $T_{\text{w}}$}
\For{steps $t=0,1,2,\dots$}{
  Compute $g_{\text{task}} \leftarrow \nabla_\theta \mathcal{L}_{\text{task}}$\;
  $\lambda_t \leftarrow \lambda \cdot \min\{1, t/T_{\text{w}}\}$\;
  Compute $g_{\text{spec}} \leftarrow \nabla_\theta \sum_\ell (\alpha_1\condproxy + \alpha_2\momentproxy)$\;
  Scale: $\gamma \leftarrow \min\{1, \rho_{\text{spec}}\norm{g_{\text{task}}}/(\norm{g_{\text{spec}}}+\delta)\}$; $\widetilde g_{\text{spec}}\!\leftarrow\! \lambda_t\gamma g_{\text{spec}}$\;
  Update with $g=\widetilde g_{\text{spec}}+g_{\text{task}}$\;
}
\end{algorithm2e}

\section{Theory (proof sketches)}
\label{sec:theory}
We show: (i) the condition proxy decreases under its gradient flow, (ii) moment gradients are bounded under mild spread, and (iii) the penalty is orthogonally invariant. Proofs are in App.~\ref{app:proofs}; the rationale for using $k\!\ge\!3$ moments is in App.~\ref{app:k3-moments}.

\paragraph{Assumptions and notation.}
We take $\epsilon>0$ so that $\lambda_{\min}(W^\top W+\epsilon I)>0$. Gradients of spectral terms are well-defined whenever the extremal singular values are simple; otherwise, statements hold almost everywhere and extend via Clarke subgradients of spectral functions~\citep{Lewis1996,Sun1988,Bhatia1997}. For Lemma~\ref{lem:moment_grad} we assume a nontrivial spectral spread $\lambda_{\max}(G)-\lambda_{\min}(G)\ge \theta\,\lambda_{\max}(G)$ for some $\theta\in(0,1]$; see Remark~\ref{rem:spread} for the $\theta\to 0$ case.

\begin{theorem}[Monotone Descent for the Condition Proxy] \label{thm:descent}
Let $W(t)$ evolve under the gradient flow $\dot{W}(t) = -\eta\nabla_W \condproxy(W(t))$ for $\eta > 0$. The condition proxy exhibits a strict descent property:
$$\frac{d}{dt}\condproxy(W(t)) = -\eta\norm{\nabla_W\condproxy(W(t))}_F^2 \le 0.$$
\end{theorem}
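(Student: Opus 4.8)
The plan is to recognize this as the standard energy-dissipation identity for gradient flow and reduce it to a one-line chain-rule computation, with the only real work being the justification that $\condproxy$ is differentiable along the trajectory. First I would fix a time $t$ at which the extremal singular values $\sigma_{\max}(W(t))$ and $\sigma_{\min}(W(t))$ are simple; by the stated assumptions this holds for almost every $t$. At such a point, classical eigenvalue perturbation theory~\citep{Lewis1996,Bhatia1997} guarantees that $\sigma_{\max}$ and $\sigma_{\min}$ are $C^1$ (indeed real-analytic) functions of $W$, and since $\epsilon>0$ keeps the argument $\lambda_{\min}(W^\top W+\epsilon I)=\sigma_{\min}^2+\epsilon$ bounded strictly away from zero, both logarithmic terms in $\condproxy = \log\sigma_{\max} - \tfrac12\log(\sigma_{\min}^2+\epsilon)$ are smooth there. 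Hence $\condproxy$ is differentiable at $W(t)$ and $\nabla_W\condproxy(W(t))$ is well-defined.

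Next I would apply the chain rule in the Frobenius inner product $\langle A,B\rangle_F=\Tr(A^\top B)$. Differentiating the composition $t\mapsto\condproxy(W(t))$ gives
\[
\frac{d}{dt}\condproxy(W(t)) = \left\langle \nabla_W\condproxy(W(t)),\, \dot{W}(t)\right\rangle_F.
\]
Substituting the gradient-flow equation $\dot{W}(t)=-\eta\,\nabla_W\condproxy(W(t))$ collapses the right-hand side to $-\eta\,\langle \nabla_W\condproxy,\nabla_W\condproxy\rangle_F = -\eta\,\norm{\nabla_W\condproxy(W(t))}_F^2$, which is precisely the claimed identity. Nonnegativity of the squared Frobenius norm together with $\eta>0$ then delivers the sign $\le 0$.

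The only obstacle is the measure-zero set of times at which an extremal singular value has multiplicity greater than one and $\condproxy$ fails to be differentiable. I would handle this exactly as the assumptions indicate: $\condproxy$ remains locally Lipschitz there, since $\log\sigma_{\max}$ is Lipschitz as long as the trajectory stays bounded away from $W=0$ (because $\sigma_{\max}$ is globally $1$-Lipschitz in $W$) and the $\epsilon$-regularization makes the $\sigma_{\min}$ term Lipschitz unconditionally. The flow is then read as the differential inclusion $\dot{W}\in-\eta\,\partial\condproxy(W)$ with $\partial$ the Clarke subdifferential, and the chain rule for Clarke-regular spectral functions~\citep{Lewis1996} yields $\frac{d}{dt}\condproxy(W(t)) = -\eta\,\norm{v(t)}_F^2\le 0$ for the a.e.-defined velocity $v(t)\in\partial\condproxy(W(t))$; by absolute continuity of $t\mapsto\condproxy(W(t))$ the proxy is then monotonically nonincreasing overall. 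I expect this nonsmooth extension to be the most delicate point, but the generic simple-spectrum case already reproduces the stated identity verbatim.
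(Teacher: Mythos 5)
Your proof is correct and takes essentially the same route as the paper: a direct chain-rule computation $\frac{d}{dt}\condproxy = \langle\nabla\condproxy,\dot W\rangle_F = -\eta\norm{\nabla\condproxy}_F^2$, with the nonsmooth (repeated extremal singular value) case deferred to Clarke subgradients exactly as the paper's assumptions and follow-up note do. Your treatment of the differentiability caveat is somewhat more explicit than the paper's, but it is the same argument.
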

\textit{Proof Sketch.} The result is an exact dissipation identity derived from the chain rule: $\frac{d}{dt}\condproxy = \langle\nabla\condproxy, \dot{W}\rangle_F = \langle\nabla\condproxy, -\eta\nabla\condproxy\rangle_F$. This confirms that minimizing $\condproxy$ is a well-posed objective that directly pushes the model towards better-conditioned states. \hfill$\square$

\begin{corollary}[Control over the Log-Condition Number] \label{cor:kappa_control}
The condition proxy $\condproxy$ and the true log-condition number $\log\kappa(W)$ are related by the identity $\log\kappa(W) = \condproxy(W) + \frac{1}{2}\log(1 + \epsilon/\sigma_{\min}^2(W))$. Thus, the monotone decrease of $\condproxy$ guaranteed by Theorem~\ref{thm:descent} forces a non-increasing trend in $\log\kappa(W)$, bounded by an additive term that vanishes as $\epsilon \to 0$.
\end{corollary}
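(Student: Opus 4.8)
The plan is to establish the claimed identity by a direct algebraic substitution and then interpret it in light of Theorem~\ref{thm:descent}. The identity is the substantive content; the ``non-increasing trend'' statement follows once the remainder term is understood, so I expect essentially no heavy calculation.

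First I would expand both sides from their definitions. By the definition of $\condproxy$ we have $\condproxy(W) = \log\sigma_{\max}(W) - \tfrac12\log(\sigma_{\min}^2(W)+\epsilon)$, while the true log-condition number is $\log\kappa(W) = \log\sigma_{\max}(W) - \log\sigma_{\min}(W)$. Subtracting, the $\log\sigma_{\max}$ terms cancel and the remaining logarithms combine:
\[
\log\kappa(W) - \condproxy(W)
= \tfrac12\log(\sigma_{\min}^2(W)+\epsilon) - \log\sigma_{\min}(W)
= \tfrac12\log\!\Big(1+\tfrac{\epsilon}{\sigma_{\min}^2(W)}\Big),
\]
which is exactly the claimed identity. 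This step is purely algebraic and uses only $\sigma_{\min}(W)>0$, guaranteed by the standing assumption $\epsilon>0$.

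Next I would record the elementary properties of the remainder $R(W) := \tfrac12\log(1+\epsilon/\sigma_{\min}^2(W))$. Since its argument exceeds $1$, we have $R(W)\ge 0$, so $\log\kappa(W)\ge\condproxy(W)$ pointwise: the proxy is always a lower bound for the genuine log-condition number. Moreover, using $\log(1+x)\le x$, we obtain the quantitative bound $0\le R(W)\le \epsilon/(2\sigma_{\min}^2(W))$, so $R(W)=O(\epsilon)$ for any $\sigma_{\min}(W)$ bounded away from $0$, and $R(W)\to 0$ as $\epsilon\to 0$.

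Finally I would combine this with Theorem~\ref{thm:descent}. Along the gradient flow, $\condproxy(W(t))$ is non-increasing, so $\log\kappa(W(t)) = \condproxy(W(t)) + R(W(t))$ tracks a non-increasing quantity up to the correction $R$; in the limit $\epsilon\to 0$ the correction vanishes and $\log\kappa$ inherits the exact monotone descent of Theorem~\ref{thm:descent}. The main obstacle is that $R$ is itself \emph{not} monotone—it depends on $\sigma_{\min}(W(t))$, which varies along the flow—so one cannot claim literal monotonicity of $\log\kappa$ at a fixed $\epsilon>0$. To make the ``trend'' precise one must rule out $\sigma_{\min}$ collapsing; I would argue heuristically that descent on $\condproxy$ inflates $\sigma_{\min}^2(W)+\epsilon$ relative to $\sigma_{\max}(W)$, discouraging such collapse, and that in any event the $O(\epsilon)$ bound renders the correction uniformly negligible for small $\epsilon$. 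This is where care is needed, but it is a mild analytic point rather than a genuine difficulty.
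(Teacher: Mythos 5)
Your proposal is correct and follows essentially the same route as the paper: both derive the identity by the same direct algebraic manipulation of the definitions of $\condproxy$ and $\log\kappa$, and then invoke Theorem~\ref{thm:descent} for the monotonicity claim. Your added remarks—that the remainder satisfies $0\le \tfrac12\log(1+\epsilon/\sigma_{\min}^2)\le \epsilon/(2\sigma_{\min}^2)$ and, importantly, that it is not itself monotone along the flow, so literal monotonicity of $\log\kappa$ at fixed $\epsilon>0$ does not follow—are a more careful reading of the "non-increasing trend" language than the paper itself provides, but they do not change the argument.
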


\paragraph{Discrete-step behavior.}
While Theorem~\ref{thm:descent} is stated for gradient flow, in our experiments we observe a \emph{discrete-step monotonic trend} for $\rho_{\text{cond}}$ under standard optimizers (Sec.~\ref{sec:exp}), supporting its use as a direct conditioning signal.

\noindent\emph{Note.} The identity holds pointwise; the monotonicity conclusion applies almost everywhere along trajectories where $\rho_{\text{cond}}$ is differentiable, and extends in the sense of energy dissipation inequalities using Clarke subgradients~\citep{Lewis1996}.

\begin{lemma}[Moment Gradients are Bounded and Scale-Friendly] \label{lem:moment_grad}
Assume the spectral spread $\lambda_{\max}(G) - \lambda_{\min}(G) \ge \theta\,\lambda_{\max}(G)$ for some $\theta\in(0,1]$. Then the Frobenius norm of the moment penalty's gradient satisfies
\[
\norm{\nabla_W \momentproxy}_F \le \frac{C \cdot K}{\norm{W}_2} + \mathcal{O}(\norm{W}_2^{-3}).
\]
Moreover, without the spread assumption one always has the general bound
\[
\|\nabla_W \rho_{\text{moment}}\|_F \;\le\; \frac{C'}{\sqrt{n}\,\epsilon}\,K\,\|W\|_2,
\]
so the claimed $1/\|W\|_2$ decay is precisely the regime where the affine normalization is edge–dominated (i.e., $d=\Theta(\|W\|_2^2)$).
\end{lemma}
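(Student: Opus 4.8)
The plan is to write the gradient of $\momentproxy$ in closed form by the chain rule and then bound it factor by factor, using only that the Chebyshev polynomials and their derivatives are controlled on $[-1,1]$ and that $\sigma(\Ghat)\subseteq[-1,1]$ by construction. From $\momentproxy=\sum_{k=3}^{K} w_k s_k^2$ the outer layer gives $\nabla_W\momentproxy=\sum_{k=3}^{K} 2w_k s_k\,\nabla_W s_k$. For the inner factor I would use the standard polynomial identity $\nabla_W\Tr\big(f(W^\top W)\big)=2W\,f'(W^\top W)$, applied to the effective function $h(G)=T_k\big((G-cI)/d\big)$ whose derivative is $h'(G)=\tfrac1d T_k'(\Ghat)$; treating the affine normalization constants $c,d$ as detached from the backward pass (as in the implementation) this yields
\[
\nabla_W s_k=\frac{2}{nd}\,W\,T_k'(\Ghat),
\qquad
\nabla_W\momentproxy=\frac{4}{nd}\,W\sum_{k=3}^{K} w_k\,s_k\,T_k'(\Ghat).
\]
The entire estimate then reduces to bounding the Frobenius norm of this one expression, with the scale $d$ carrying all of the dependence on $\norm{W}_2$.

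Two elementary spectral facts control the factors. Since every eigenvalue $\mu_i$ of $\Ghat$ lies in $[-1,1]$ and $\abs{T_k}\le 1$ there, $\abs{s_k}=\tfrac1n\abs{\Tr T_k(\Ghat)}\le 1$; and Markov's inequality $\max_{[-1,1]}\abs{T_k'}=T_k'(1)=k^2$ shows the eigenvalues $T_k'(\mu_i)$ of the symmetric matrix $T_k'(\Ghat)$ satisfy $\norm{T_k'(\Ghat)}_F=\big(\sum_i T_k'(\mu_i)^2\big)^{1/2}\le\sqrt{n}\,k^2$. Combining these with submultiplicativity $\norm{WM}_F\le\norm{W}_2\norm{M}_F$ and the triangle inequality over the (at most $K$) terms,
\[
\norm{\nabla_W\momentproxy}_F
\le\frac{4\norm{W}_2}{nd}\sum_{k=3}^{K} w_k\,\abs{s_k}\,\norm{T_k'(\Ghat)}_F
\le\frac{4\norm{W}_2}{\sqrt{n}\,d}\sum_{k=3}^{K} w_k\,k^2 .
\]
Using $d\ge\epsilon$ (always valid because $d=\max\{\cdot,\epsilon\}$) and writing $C'=\tfrac{4}{K}\sum_{k=3}^{K} w_k k^2$ gives the general bound $C'\,K\,\norm{W}_2/(\sqrt{n}\,\epsilon)$ at once.

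For the sharper spread-dependent estimate I would replace $d\ge\epsilon$ by the edge-dominated lower bound. Under $\lambda_{\max}(G)-\lambda_{\min}(G)\ge\theta\,\lambda_{\max}(G)$ the spread term attains the maximum in $d$ once $\norm{W}_2$ is large, so $d=\tfrac12(\lambda_{\max}(G)-\lambda_{\min}(G))$ and, using $\lambda_{\max}(G)=\norm{W}_2^2$,
\[
\frac{\norm{W}_2}{d}
=\frac{2}{\norm{W}_2}\cdot\frac{1}{1-\sigma_{\min}^2/\sigma_{\max}^2}
=\frac{2}{\norm{W}_2}+\mathcal{O}\!\big(\norm{W}_2^{-3}\big),
\]
where the geometric expansion is legitimate precisely because the spread hypothesis forces $1-\sigma_{\min}^2/\sigma_{\max}^2\ge\theta$ bounded away from $0$. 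Substituting into the factor bound turns the prefactor into $\mathcal O(1/\norm{W}_2)$, producing the leading term $C\,K/\norm{W}_2$ with the stated remainder and constant $C=\tfrac{8}{\theta\sqrt n}\cdot\tfrac1K\sum_{k}w_k k^2$; the $\mathcal O(\norm{W}_2^{-3})$ is thus exactly the second-order term of the $1/(\sigma_{\max}^2-\sigma_{\min}^2)$ expansion.

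The main obstacle is the implicit dependence of $c=\tfrac12(\lambda_{\max}+\lambda_{\min})$ and $d$ on $G$, which I suppressed by detaching. Retaining their gradients adds terms proportional to $\tfrac{1}{nd}\Tr\!\big(T_k'(\Ghat)\big)\,\partial_W c$ and $\tfrac{1}{nd^2}\Tr\!\big((G-cI)T_k'(\Ghat)\big)\,\partial_W d$; because $\partial_W\lambda_{\max},\partial_W\lambda_{\min}$ are rank one with norm $\mathcal O(\norm{W}_2)$ and $d=\Theta(\norm{W}_2^2)$, each such term is again $\mathcal O(1/\norm{W}_2)$ in the spread regime and is absorbed into $C$, so the order of the bound is unchanged. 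Making this rigorous needs the simple-extremal-eigenvalue assumption so that $\lambda_{\max},\lambda_{\min}$ are differentiable with the stated rank-one derivatives; on the measure-zero set where multiplicities occur the inequality is to be read in the Clarke-subgradient sense already invoked for Theorem~\ref{thm:descent}.
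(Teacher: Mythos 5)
Your proposal is correct and follows essentially the same route as the paper's proof: the same dominant-term formula $\nabla_W s_k = \tfrac{2}{nd}\,W\,T_k'(\Ghat)$, the same Chebyshev bound $\|T_k'\|_\infty \le k^2$ via $T_k' = k\,U_{k-1}$, the same $\|WM\|_F \le \|W\|_2\|M\|_F$ and $\sqrt{n}$ step, and the same two regimes $d=\Theta(\|W\|_2^2)$ versus $d=\epsilon$. Your treatment is, if anything, slightly more explicit than the paper's (you record $|s_k|\le 1$ and track the detached $c,d$ terms with their actual $\mathcal{O}(\|W\|_2/d)$ size, whereas the paper asserts these are $\mathcal{O}(\|W\|_2^{-3})$); the only imprecision is your closing claim that the $\mathcal{O}(\|W\|_2^{-3})$ remainder is \emph{exactly} the second-order geometric term, which would require $\sigma_{\min}(W)$ to stay bounded, but this is immaterial since your $1/\theta$ factor absorbed into $C$ already yields the stated bound.
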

\textit{Proof Sketch.} The proof proceeds in three steps. (1) The gradient of the trace term is $\nabla_{\Ghat}\Tr(T_k(\Ghat)) = T_k'(\Ghat)$. (2) The derivative of a Chebyshev polynomial satisfies $T_k'(x)=k\,U_{k-1}(x)$ with $\sup_{x\in[-1,1]}|U_{k-1}(x)|=k$, hence $\|T_k'(\cdot)\|_\infty\le k^2$~\citep[Thm.~1.2]{Rivlin1990}. (3) The affine scaling denominator $d = \max\{\frac{\lambda_{\max}-\lambda_{\min}}{2},\epsilon\}$ scales as $\Theta(\norm{W}_2^2)$ under the spread assumption. Combining these facts via the chain rule for matrix calculus ($dG = W^\top dW + dW^\top W$) yields the stated $1/\norm{W}_2$ decay; terms involving gradients of $c,d$ are lower order ($\mathcal{O}(d^{-2})$). The alternative bound follows by taking $d=\epsilon$. \hfill$\square$

\begin{remark}\label{rem:spread}
When the spectrum is nearly degenerate (spread $\to 0$), $d$ is set by $\epsilon$ and the bound becomes linear in $\|W\|_2$. In practice we keep $\epsilon$ small and rely on the condition proxy to widen the edges, quickly entering the favorable $1/\|W\|_2$ regime.
\end{remark}

\begin{proposition}[Orthogonal Invariance] \label{prop:invariance_main}
The \cmr penalty is invariant under orthogonal transformations. For any orthogonal matrices $Q, R$ of appropriate dimensions,
$$\condproxy(QWR) = \condproxy(W) \quad \text{and} \quad \momentproxy(QWR) = \momentproxy(W).$$
\end{proposition}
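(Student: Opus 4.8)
The plan is to reduce both invariance claims to two elementary facts: orthogonal transformations preserve singular values, and polynomial functional calculus commutes with orthogonal similarity. For the condition proxy, writing the SVD $W = U\Sigma V^\top$, the product $QWR = (QU)\Sigma(R^\top V)^\top$ is again an SVD since $QU$ and $R^\top V$ are orthogonal. Hence $\sigma_{\max}(QWR) = \sigma_{\max}(W)$ and $\sigma_{\min}(QWR) = \sigma_{\min}(W)$; because $\condproxy$ depends on $W$ only through these two extremal singular values, $\condproxy(QWR) = \condproxy(W)$ is immediate.

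For the moment proxy I would first track how $G = W^\top W$ transforms. The left factor is absorbed, $(QWR)^\top(QWR) = R^\top W^\top Q^\top Q W R = R^\top G R$, so the Gram matrix undergoes an orthogonal similarity $G \mapsto R^\top G R$. Similarity preserves eigenvalues, so $\lambda_{\max}(G)$ and $\lambda_{\min}(G)$ are unchanged, and therefore the centering and scaling constants $c$ and $d$ (including the clamping $d = \max\{(\lambda_{\max}-\lambda_{\min})/2,\epsilon\}$) are identical for $W$ and $QWR$. It follows that $\Ghat \mapsto R^\top \Ghat R$, i.e.\ the normalized operator transforms by the same similarity.

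The final step exploits the polynomial nature of $T_k$: for any polynomial $p$ and orthogonal $R$ one has $p(R^\top A R) = R^\top p(A) R$, so $T_k(R^\top \Ghat R) = R^\top T_k(\Ghat) R$. Taking the trace and using the cyclic property together with $RR^\top = I$ yields $\Tr(T_k(R^\top \Ghat R)) = \Tr(T_k(\Ghat))$, so each $s_k$ is invariant and hence so is $\momentproxy$.

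There is no serious obstacle here; the result is essentially bookkeeping around unitary invariance. The only point needing care is confirming that $c$ and $d$ are genuinely unchanged---in particular that the spread $\lambda_{\max}(G)-\lambda_{\min}(G)$ feeding the $\max$ is an eigenvalue-determined quantity invariant under similarity. I would also remark that the differentiability caveats attached to Theorem~\ref{thm:descent} and Lemma~\ref{lem:moment_grad} (eigenvalue crossings) are irrelevant here, since orthogonal invariance is an exact algebraic identity holding pointwise, regardless of whether the extremal singular values are simple.
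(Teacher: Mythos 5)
Your proposal is correct and follows essentially the same route as the paper's proof: singular-value invariance handles $\condproxy$, and the chain $G\mapsto R^\top G R$, unchanged $(c,d)$, polynomial functional calculus, and the cyclic trace property handle $\momentproxy$. The only cosmetic difference is that you establish singular-value invariance by exhibiting the SVD of $QWR$ directly rather than via the eigenvalues of $(QWR)^\top(QWR)$; both are the same standard fact, and your closing remark that the differentiability caveats are irrelevant here is a nice touch.
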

\textit{Proof Sketch.} The singular values of $W$ are invariant to orthogonal transformations, which proves invariance of $\condproxy$. For the moment term, the Gram matrix transforms as $G_{QWR} = R^\top G_W R$. By the functional calculus for matrix polynomials~\citep[Chap.~1]{Higham2008}, $T_k(R^\top \Ghat_W R)=R^\top T_k(\Ghat_W)R$. Using the cyclic property of the trace, $s_k(QWR)=\tfrac1n\Tr(T_k(\Ghat_{QWR}))=\tfrac1n\Tr(T_k(\Ghat_W))=s_k(W)$. Thus $\rho_{\text{moment}}$ and hence the full penalty are orthogonally invariant. \hfill$\square$

\section{Why Well-Conditioned Layers Help}
\paragraph{Notation.}
Let $f: \mathbb{R}^{d_0} \to \mathbb{R}^{d_L}$ be a depth-$L$ feedforward neural network defined by
\[
f(x) = W_L z_{L-1}, \quad \text{where} \quad
z_\ell = \phi_\ell(h_\ell), \quad h_\ell = W_\ell z_{\ell-1}, \quad z_0 = x.
\]
Each layer consists of an affine map $W_\ell$ followed by an elementwise nonlinearity $\phi_\ell$.
Define the layerwise Jacobian factor $J_\ell(x) := \frac{\partial z_\ell}{\partial z_{\ell-1}}$.

Since $\phi_\ell$ acts elementwise, its Jacobian is diagonal:
\[
D_\ell(x) := \mathrm{diag}(\phi'_\ell(h_\ell(x))) \in \mathbb{R}^{d_\ell \times d_\ell}.
\]
Thus, by the chain rule,
\[
J_\ell(x) = D_\ell(x)\,W_\ell.
\]
For notational convenience, we denote the layerwise Jacobian factor in the alternate order
\[
J_\ell(x) := W_\ell D_\ell(x),
\]
so that the full network Jacobian is written as
\[
J(x) := \frac{\partial f(x)}{\partial x}
= J_L(x) \cdots J_1(x),
\]
with the convention that nonlinearities appear on the right within each $J_\ell$.

\paragraph{Assumption.}
Assume $\sigma_{\min}(W_\ell) > 0$ for all layers $\ell$.
Furthermore, assume there exist deterministic constants $\mu_\ell, L_\ell$ such that
\[
0 \le \mu_\ell \le \|D_\ell(x)\|_2 \le L_\ell < \infty
\quad \text{for all } x \in \mathcal{X},
\]
where $\mathcal{X}$ denotes the data region of interest (e.g., the training domain).

\begin{proposition} [Layerwise Jacobian bound.]
\label{prop:layerwise}
For every input $x$,
\[
\sigma_{\max}(J(x)) \le \prod_{\ell=1}^L L_\ell \sigma_{\max}(W_\ell), \qquad \sigma_{\min}(J(x)) \le \prod_{\ell=1}^L\mu_\ell\sigma_{\min}(W_\ell),
\]
hence,
\[
\boxed{\kappa\left(J(x)\right)\ \le\
\left(\prod_{\ell=1}^{L}\kappa(W_\ell)\right)\cdot
\left(\prod_{\ell=1}^{L}\tfrac{L_\ell}{\mu_\ell}\right)}.
\]
\end{proposition}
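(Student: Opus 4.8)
The plan is to factor $J(x)=J_L(x)\cdots J_1(x)$ with $J_\ell(x)=W_\ell D_\ell(x)$, reduce each spectral edge to an elementary product inequality for singular values, and then divide. I would record at the outset that $D_\ell$ is diagonal with entries $\phi_\ell'(h_{\ell}(x))$, so its singular values are exactly the $|\phi_\ell'(h_{\ell,i})|$.

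For the upper edge, apply submultiplicativity of the spectral norm twice: $\sigma_{\max}(J)=\norm{J_L\cdots J_1}_2\le\prod_\ell\norm{W_\ell}_2\,\norm{D_\ell}_2\le\prod_\ell \sigma_{\max}(W_\ell)\,L_\ell$, using the assumed $\norm{D_\ell}_2\le L_\ell$. This delivers the first displayed bound directly.

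For the lower edge, use the reverse inequality for products. Writing $\sigma_{\min}(M)=\min_{\norm{v}=1}\norm{Mv}$ and peeling one factor at a time yields $\norm{J_L\cdots J_1 v}\ge\big(\prod_\ell\sigma_{\min}(J_\ell)\big)\norm{v}$, hence $\sigma_{\min}(J)\ge\prod_\ell\sigma_{\min}(J_\ell)$; the diagonal structure then gives $\sigma_{\min}(W_\ell D_\ell)\ge\sigma_{\min}(W_\ell)\,\sigma_{\min}(D_\ell)\ge\mu_\ell\,\sigma_{\min}(W_\ell)$. For this to supply the denominator below, the inequality in the $\sigma_{\min}$ display should read $\ge$ rather than $\le$, and $\mu_\ell$ must be read as a lower bound on every diagonal entry $|\phi_\ell'|$ (equivalently on $\sigma_{\min}(D_\ell)$), not merely on $\norm{D_\ell}_2$; both readings match the intended activation-slope assumption.

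Finally, dividing the $\sigma_{\max}$ upper bound by the $\sigma_{\min}$ lower bound and regrouping each factor as $\frac{\sigma_{\max}(W_\ell)}{\sigma_{\min}(W_\ell)}\cdot\frac{L_\ell}{\mu_\ell}=\kappa(W_\ell)\cdot\frac{L_\ell}{\mu_\ell}$ produces the boxed inequality. I expect the one genuinely delicate step to be the product lower bound for $\sigma_{\min}$: the estimate $\sigma_{\min}(AB)\ge\sigma_{\min}(A)\sigma_{\min}(B)$ fails for arbitrary rectangular factors and needs each intermediate map to be injective, so that $\norm{Ay}\ge\sigma_{\min}(A)\norm{y}$ holds on the full relevant subspace. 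I would obtain this from the standing hypotheses $\sigma_{\min}(W_\ell)>0$ and $\mu_\ell>0$, which render each $J_\ell$ of full column rank on $\mathcal{X}$; if a width strictly contracts, the peeling argument still closes provided one tracks the correct smallest-singular-value index and retains full column rank of the composite.
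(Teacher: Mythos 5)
Your proof takes essentially the same route as the paper's: factor $J(x)=J_L\cdots J_1$ with $J_\ell=W_\ell D_\ell$, bound each factor's spectral edges via submultiplicativity and the reverse product inequality $\sigma_{\min}(AB)\ge\sigma_{\min}(A)\sigma_{\min}(B)$, then take the ratio. You also correctly flag two issues the paper glosses over: the second display in the statement should read $\sigma_{\min}(J(x))\ \ge\ \prod_{\ell}\mu_\ell\,\sigma_{\min}(W_\ell)$ (the $\le$ is a typo, and the paper's own proof derives the $\ge$ direction), and the standing assumption as written only sandwiches $\norm{D_\ell}_2=\max_i|\phi'_\ell|$ between $\mu_\ell$ and $L_\ell$, whereas the step $\norm{D_\ell^{-1}}_2^{-1}\ge\mu_\ell$ used in both proofs actually requires $\mu_\ell\le\min_i|\phi'_\ell|=\sigma_{\min}(D_\ell)$.
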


\begin{proof}
Let $J(x) = \frac{\partial f(x)}{\partial x} = J_L(x)J_{L-1}(x)\cdots J_1(x)$ be the Jacobian of $f$ with respect to the input $x$. For any pair of matrices $A$, $B$, the singular values satisfy the standard inequalities:
\[
\sigma_{\max}(AB)\le\sigma_{\max}(A)\,\sigma_{\max}(B),
\qquad
\sigma_{\min}(AB)\ge\sigma_{\min}(A)\,\sigma_{\min}(B).
\tag{$*$}
\]
\begin{enumerate}
\item For each layer $\ell$, by assumption,
\[
\mu_\ell \le \|D_\ell(x)\|_2 = \max_i|\phi'(h_\ell^{(i)}(x))| \le L_\ell.
\]
Hence, 
\[
\sigma_{\max}(J_\ell(x))
\le \sigma_{\max}(W_\ell)\,\|D_\ell(x)\|_2
\le L_\ell\sigma_{\max}(W_\ell),
\]
\[
\sigma_{\min}(J_\ell(x))
\ge \sigma_{\min}(W_\ell)\|D_\ell(x)^{-1}\|_2^{-1}
\ge \mu_\ell\,\sigma_{\min}(W_\ell).
\]
\item Applying inequality $(*)$ recursively to the product $J(x) = J_L \cdots J_1$, we obtain
\[
\sigma_{\max}(J(x)) \le \prod_{\ell=1}^L \sigma_{\max}(J_\ell(x)) \le \prod_{\ell=1}^L L_\ell\sigma_{\max}(W_\ell),
\]
\[
\sigma_{\min}(J(x)) \ge \prod_{\ell=1}^L \sigma_{\min}(J_\ell(x)) \ge \prod_{\ell=1}^L\mu_{\ell}\sigma_{\min}(W_\ell)
\]
\item Taking the ratio yields
\[
\kappa(J(x))
=\frac{\sigma_{\max}(J(x))}{\sigma_{\min}(J(x))}
\le
\left(\prod_{\ell=1}^{L}\frac{\sigma_{\max}(W_\ell)}{\sigma_{\min}(W_\ell)}\right)
\left(\prod_{\ell=1}^{L}\frac{L_\ell}{\mu_\ell}\right)
=
\left(\prod_{\ell=1}^{L}\kappa(W_\ell)\right)
\left(\prod_{\ell=1}^{L}\tfrac{L_\ell}{\mu_\ell}\right).
\]
\end{enumerate}
This completes the proof.
\end{proof}

\paragraph{Implication of Proposition~\ref{prop:layerwise}.}
This bound shows that the network Jacobian condition number $\kappa(J(x))$
grows multiplicatively with layerwise condition numbers $\kappa(W_\ell)$
and activation slope ratios $L_\ell / \mu_\ell$.
Well-conditioned layers therefore directly improve the overall Jacobian conditioning,
which is essential for preserving signal geometry and preventing forward/backward distortion.

\begin{corollary}[Gradient propagation]
\label{cor:grad}
Backpropagation satisfies:
\[
\left\| \frac{\partial \mathcal{L}}{\partial h_\ell} \right\|
\le \prod_{i > \ell} \sigma_{\max}(J_i(x)) \cdot
\left\| \frac{\partial \mathcal{L}}{\partial h_L} \right\|.
\]
By Proposition 1, decreasing $\kappa(W_i)$ reduces these factors and mitigates explosion or vanishing.

\end{corollary}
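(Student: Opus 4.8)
The plan is to recognize the stated inequality as the backward (adjoint) counterpart of the forward singular-value bounds already established in Proposition~\ref{prop:layerwise}: it follows by unrolling the reverse-mode chain rule and invoking submultiplicativity of the spectral norm. First I would write the one-step backward recursion relating the loss sensitivity at successive pre-activations. Since $h_i = W_i z_{i-1} = W_i \phi_{i-1}(h_{i-1})$, the reverse-mode chain rule gives
\[
\frac{\partial \mathcal{L}}{\partial h_{i-1}} = \left(\frac{\partial h_i}{\partial h_{i-1}}\right)^{\!\top} \frac{\partial \mathcal{L}}{\partial h_i}, \qquad \frac{\partial h_i}{\partial h_{i-1}} = W_i\, D_{i-1}(x),
\]
where $D_{i-1}(x) = \mathrm{diag}(\phi_{i-1}'(h_{i-1}(x)))$. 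Iterating this from layer $L$ down to layer $\ell$ expresses the sensitivity as a single matrix–vector product,
\[
\frac{\partial \mathcal{L}}{\partial h_\ell} = \left(\prod_{i=\ell+1}^{L} \frac{\partial h_i}{\partial h_{i-1}}\right)^{\!\top} \frac{\partial \mathcal{L}}{\partial h_L},
\]
with the product taken in the order dictated by the chain rule (the precise ordering is immaterial for the operator-norm bound below).

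Next I would pass to norms. Using $\norm{Av} \le \sigma_{\max}(A)\,\norm{v}$ together with the transpose-invariance $\sigma_{\max}(A^\top) = \sigma_{\max}(A) = \norm{A}_2$, the factorization yields
\[
\norm{\frac{\partial \mathcal{L}}{\partial h_\ell}} \le \sigma_{\max}\!\left(\prod_{i=\ell+1}^{L} \frac{\partial h_i}{\partial h_{i-1}}\right) \norm{\frac{\partial \mathcal{L}}{\partial h_L}}.
\]
Applying inequality $(*)$ from Proposition~\ref{prop:layerwise}, namely $\sigma_{\max}(AB) \le \sigma_{\max}(A)\,\sigma_{\max}(B)$, recursively across the product then gives
\[
\norm{\frac{\partial \mathcal{L}}{\partial h_\ell}} \le \prod_{i>\ell} \sigma_{\max}\!\left(\frac{\partial h_i}{\partial h_{i-1}}\right) \norm{\frac{\partial \mathcal{L}}{\partial h_L}},
\]
which is exactly the claimed bound once each backward factor is identified with the layerwise Jacobian factor $J_i(x)$. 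The link to conditioning is then immediate: bounding each factor by $\sigma_{\max}(W_i D_{i-1}) \le \norm{D_{i-1}}_2\,\sigma_{\max}(W_i) \le L_{i-1}\sigma_{\max}(W_i)$ shows that the amplification is governed by the product of layerwise $\sigma_{\max}(W_i)$ and activation-slope constants, so that steering the spectral edges through \cmr directly curbs both explosion and vanishing of backpropagated signals.

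I expect the only real difficulty to be bookkeeping rather than analysis. The backward factors are $\partial h_i/\partial h_{i-1} = W_i D_{i-1}$, whereas the forward-mode notation in the excerpt writes $J_i = W_i D_i$, so the diagonal activation-derivative block is shifted by one index; moreover the adjoint introduces transposes that reverse the product order. I would resolve this by stating the bound in terms of the backward factors and noting that both subtleties are cosmetic: $\sigma_{\max}$ is transpose-invariant, and under the uniform activation-slope assumption $\mu_\ell \le \norm{D_\ell}_2 \le L_\ell$ the per-layer estimate $\sigma_{\max}(W_i D_{i-1}) \le L_{i-1}\sigma_{\max}(W_i)$ matches that of Proposition~\ref{prop:layerwise} up to relabeling of the index on the diagonal block. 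Hence the corollary reduces to the same submultiplicative argument, now applied to the adjoint chain.
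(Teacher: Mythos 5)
Your proposal is correct and follows essentially the same route as the paper's proof: unroll the reverse-mode chain rule into a product of transposed layer Jacobians, apply transpose-invariance and submultiplicativity of the spectral norm, and then invoke the per-layer bound $\sigma_{\max}(W_iD)\le L\,\sigma_{\max}(W_i)$ from Proposition~\ref{prop:layerwise}. The only difference is that you explicitly track the index shift in the diagonal activation block (the backward factor is $W_iD_{i-1}$ rather than $W_iD_i$), a bookkeeping point the paper's proof glosses over but which, as you note, does not affect the bound.
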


\begin{proof}
We prove the result in three steps:
\begin{enumerate}
\item Let $h_\ell$ denote the preactivation at layer $\ell$ and $\mathcal L$ the scalar loss. By the chain rule of differentiation, the gradient of the loss with respect to $h_\ell$ is given by
\[
\frac{\partial\mathcal L}{\partial h_\ell} = J_{\ell+1}(x)^\top J_{\ell+2}(x)^\top \cdots J_L(x)^\top\frac{\partial\mathcal L}{\partial h_L}.
\]
That is, each gradient at layer $\ell$ is obtained by repeatedly multiplying the Jacobian transposes of all subsequent layer.
\item Taking the spectral norm $\|\cdot\|_2$ on both sides and using the submultiplicativity property $\|AB\| \le \|A\|\|B\|$, we have:
\[
\left\|\frac{\partial\mathcal L}{\partial h_\ell}\right\| \le \left(\prod_{i=\ell+1}^L\|J_i(x)^\top\|\right)\left\|\frac{\partial\mathcal L}{\partial h_L}\right\|.
\]
Since the spectral norm is invariant under transposition ($\|A^\top\| = \|A\|$ for all matrices $A$), this simplifies to
\[
\left\|\frac{\partial\mathcal L}{\partial h_\ell}\right\| \le \left(\prod_{i > \ell}\|J_i(x)\|\right)\left\|\frac{\partial\mathcal{L}}{\partial h_L}\right\|.
\]
\item By proposition 1, each $\|J_i(x)\|$ can be upper-bounded as $\|J_i(x)\| \le L_i\sigma_{\max}(W_i)$. Therefore, reducing $\kappa(W_i)$ Which directly reduces $\sigma_{\max}(W_i)$ relative to $\sigma_{\min}(W_i)$ tightens the bound on $\|J_i(x)\|$ and hence on the product $\prod_{i>\ell}\|J_i(x)\|$.
\end{enumerate}
\end{proof}

\paragraph{Implication of Corollary~\ref{cor:grad}.}
This result reveals how backpropagated gradients grow or decay across layers,
with their norms governed by the product of Jacobian operator norms $\|J_i(x)\|$.
Since these norms depend on the spectral properties of $W_i$,
reducing $\kappa(W_i)$ suppresses gradient explosion or vanishing,
leading to more stable and reliable training dynamics.

\section{Experiment: Adversarial Ill-Conditioning ($\kappa$-stress)}
\label{sec:exp}

\paragraph{Setup.}
\begin{itemize}
    \item \textbf{Data/Model.} MNIST; a deep 15-layer MLP (width 256) with \texttt{tanh} activations.
    \item \textbf{$\kappa$-stress.} We use an orthogonal initialization scaled by $0.06$ to create an intentionally adversarial, ill-conditioned starting point where vanilla training struggles.
    \item \textbf{Optimizer.} Adam with a learning rate of $10^{-3}$.
    \item \textbf{\cmr.} We set $(K,\lambda,\alpha_1,\alpha_2,\beta)=(5,0.02,1.0,0.1,0.15)$, using a 2-epoch warmup, a spectral gradient cap $\rho_{\text{spec}}=0.5$, and a global gradient clip of $5.0$.
    \item \textbf{Metrics.} Test accuracy, average per-step gradient norm (global $\ell_2$ over parameters, averaged across steps within an epoch), mean layer-wise condition number $\kappa(W)$ \emph{(arithmetic mean across layers)}, and the maximum absolute Chebyshev moment $\max_{k\in[3,K]}|s_k|$.
    \item \textbf{Baselines.} Vanilla corresponds to $\lambda{=}0$; all other settings (optimizer, clipping, warmup) are identical between vanilla and \cmr.
\end{itemize}

\paragraph{Results and Analysis.}
\cmr acts as a powerful spectral preconditioner, successfully restoring the trainability of the network. The headline result is the dramatic improvement in conditioning: \cmr collapses the mean layer condition number $\kappa(W)$ from a near-singular $\sim\!3.9\times 10^3$ to a well-behaved $3.4$ within five epochs—a thousand-fold improvement (Fig.~\ref{fig:all} (c)).

\noindent This drastic re-conditioning has a profound impact on optimization. While the vanilla model remains stuck at $\approx\!10\%$ accuracy, the \cmr-regularized model rapidly recovers to $\approx\!86\%$ (Fig.~\ref{fig:all} (a)). Crucially, this recovery is accompanied not by smaller, but by \emph{larger and more effective} gradients; the average gradient norm rises from a stagnant $\approx\!1.4$ to a healthy $\approx\!4.2$ (Fig.~\ref{fig:all} (b), y-axis on a logarithmic scale). \cmr lifts $\sigma_{\min}$ while moderating $\sigma_{\max}$ to preserve Jacobian signal, and the moment term bounds higher-order moments to smooth the spectrum (Fig.~\ref{fig:all} (d)).
\footnote{We characterize this failure mode by its \emph{spectral signature} (e.g., extreme $\kappa$) rather than the symptom of “vanishing gradients,” which can arise from multiple mechanisms.} See Appendix~\ref{app:standard_baselines} for standard (non–$\kappa$-stress) L2/SN comparisons; this setup differs from the main experiment.

\begin{figure}[H]
\centering
\begin{minipage}[t]{0.45\textwidth}
    \centering
    \includegraphics[width=\linewidth]{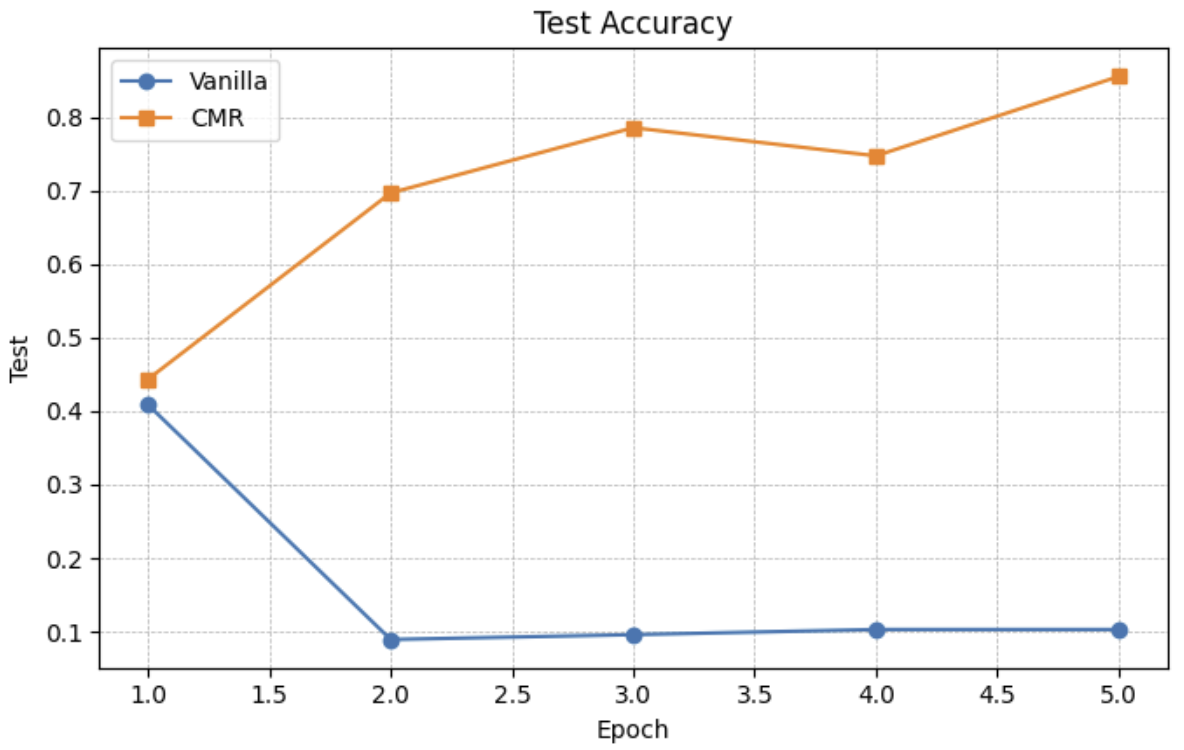}
    \caption*{{\footnotesize (a) Test accuracy}}
\end{minipage}\hfill
\begin{minipage}[t]{0.45\textwidth}
    \centering
    \includegraphics[width=\linewidth]{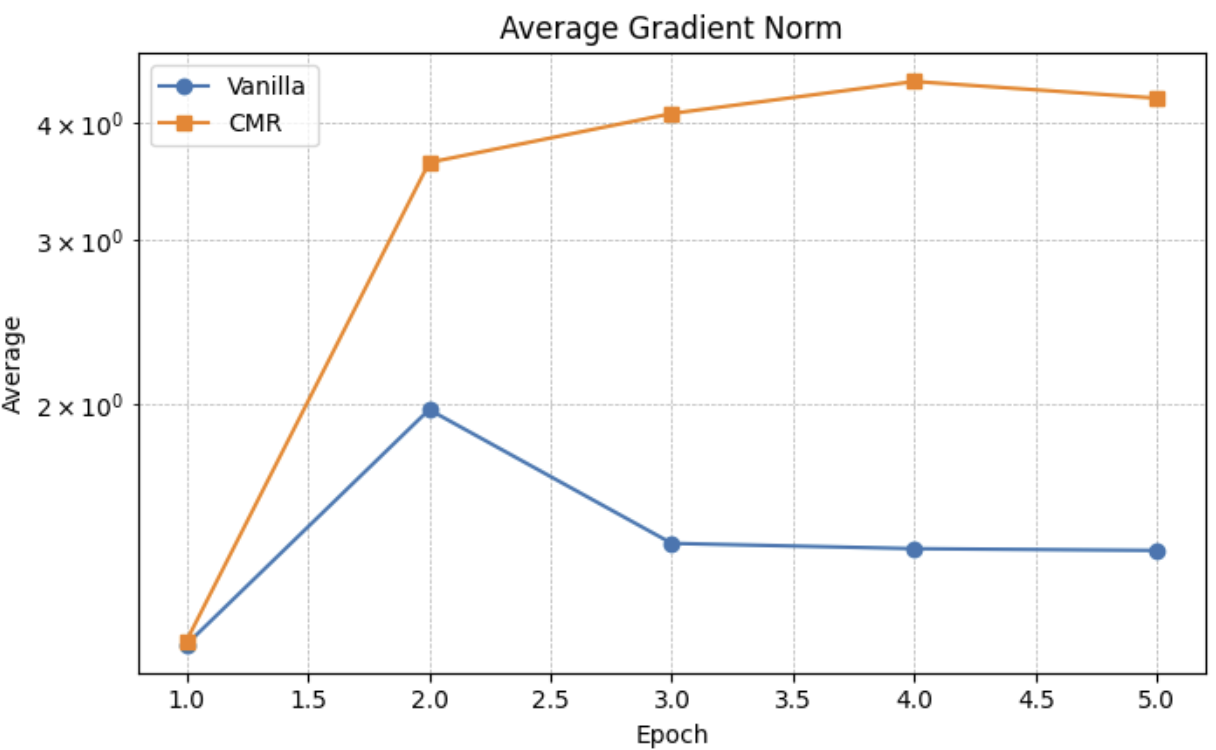}
    \caption*{{\footnotesize (b) Avg. grad norm (\textit{logarithmic y-axis})}}
\end{minipage}

\vspace{4pt} 

\begin{minipage}[t]{0.45\textwidth}
    \centering
    \includegraphics[width=\linewidth]{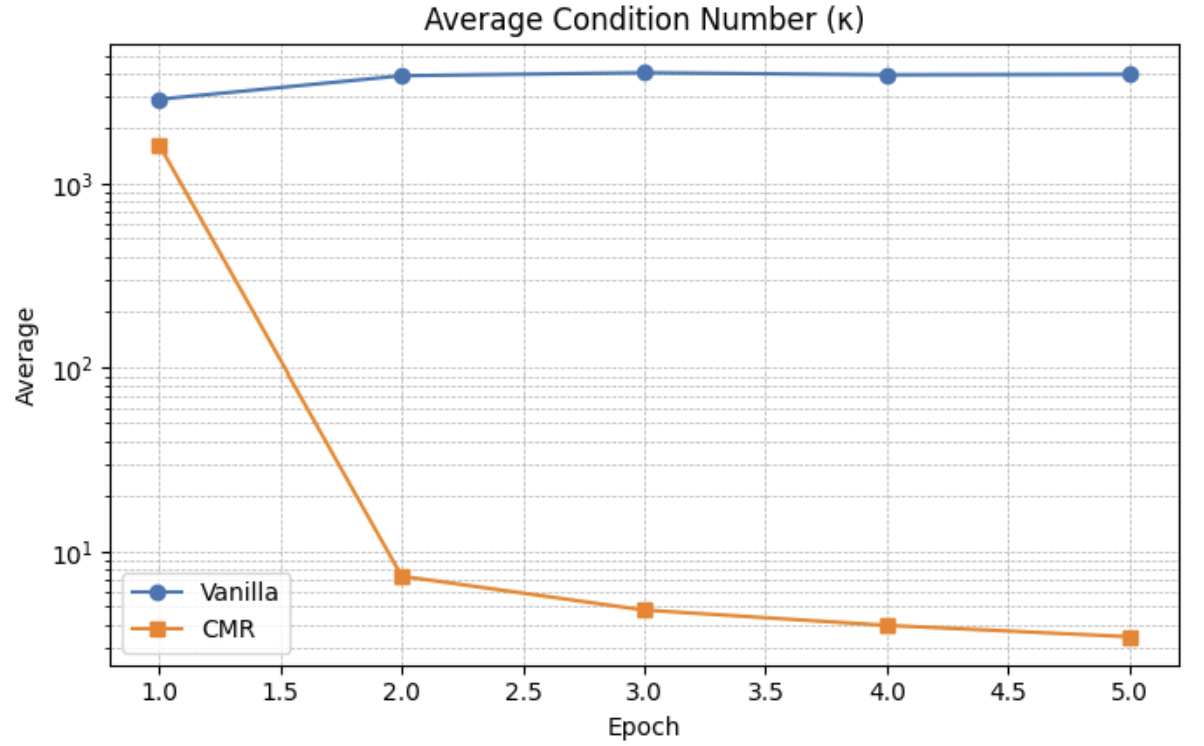}
    \caption*{{\footnotesize (c) Mean $\kappa(W)$ (\textit{logarithmic y-axis})}}
\end{minipage}\hfill
\begin{minipage}[t]{0.45\textwidth}
    \centering
    \includegraphics[width=\linewidth]{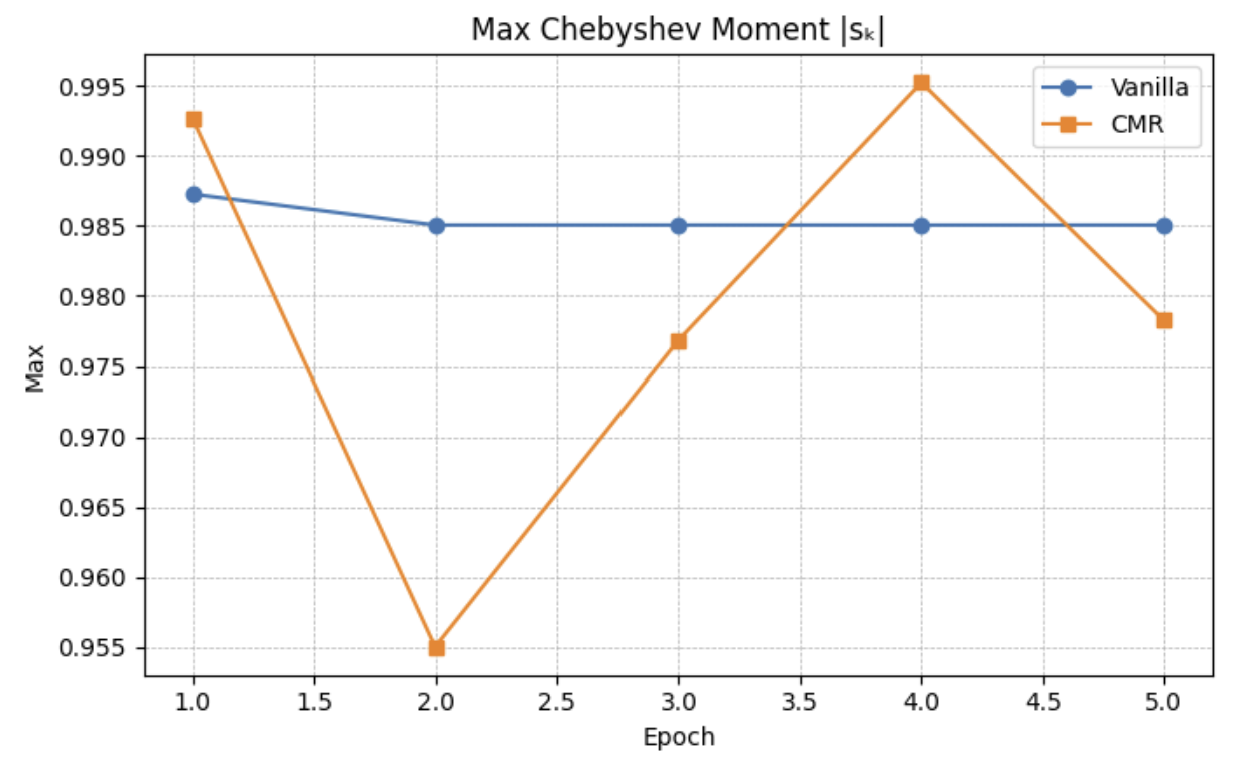}
    \caption*{{\footnotesize (d) $\max_{k\in[3,5]}|s_k|$}}
\end{minipage}

\caption{\textbf{CMR under $\kappa$-stress.} CMR (orange) restores trainability and directly improves spectral conditioning vs. vanilla (blue).}
\label{fig:all}
\end{figure}

\paragraph{Takeaway.}
In this regime, the core driver of trainability is \emph{conditioning}, not raw gradient magnitude. \cmr functions as an \emph{optimization-driven preconditioner}: the condition proxy sculpts the spectral edges by lifting $\sigma_{\min}$ and moderating $\sigma_{\max}$, while the moment term smooths interior mass. This holistic control lowers $\log\kappa$, producing gradients that are larger, better-scaled, and better aligned with useful learning directions—enabling the sharp recovery in test performance.

\section{Related Work}
\label{sec:related_work}
Stability has been pursued via \textbf{architecture} (residual, normalization)~\citep{He2016deep,Ioffe2015batch} and \textbf{initialization}~\citep{Glorot2010understanding}, which help indirectly rather than optimizing spectra. Among \textbf{regularizers}, spectral norm constrains only $\|W\|_2$~\citep{Miyato2018spectral}, while orthogonality pushes near-isometries~\citep{Cisse2017parseval}. In contrast, \cmr is a loss-level, orthogonally invariant regularizer that jointly controls spectral edges (log-condition) and higher-order shape (Chebyshev) with a capped mixing rule, offering a more direct handle on conditioning with descent and bounded-gradient guarantees.

\section{Conclusion}
\label{sec:conclusion}
Future work includes applying \cmr to Transformers, learning which Chebyshev moments (and weights) to penalize, and running ablations that disentangle the effects of the condition proxy and the moment term.

\newpage

\bibliography{references}

\newpage

\appendix

\section{Why Moments Start at $k=3$}
\label{app:k3-moments}
Let $G=W^\top W$ with eigenvalues $\{\lambda_i\}_{i=1}^n$ and define the affine normalization
$\widehat G=(G-cI)/d$ with $c=\tfrac{1}{2}(\lambda_{\max}+\lambda_{\min})$ and
$d=\max\{\tfrac{1}{2}(\lambda_{\max}-\lambda_{\min}),\epsilon\}$ so that $\sigma(\widehat G)\subseteq[-1,1]$.
Chebyshev moments are $s_k=\frac{1}{n}\Tr(T_k(\widehat G))$ with $T_0(x)=1$, $T_1(x)=x$, and $T_2(x)=2x^2-1$.
They satisfy
\[
s_0 \equiv 1,\qquad
s_1=\tfrac{1}{n}\Tr(\widehat G)=\tfrac{\overline{\lambda}-c}{d},\qquad
s_2=\tfrac{1}{n}\Tr(2\widehat G^2-I)=2\cdot \tfrac{1}{n}\sum_i \widehat\lambda_i^2 - 1,
\]
hence $\mathrm{Var}(\widehat\lambda)=\big(\tfrac{s_2+1}{2}\big)-s_1^2$.
Thus $s_0,s_1,s_2$ encode mass/mean/variance of the \emph{normalized} spectrum—quantities already fixed by the edge-based normalization $(c,d)$ and largely governed by the condition proxy.
Penalizing $k\le2$ would double-count edge/scale control and can interfere with $\rho_{\text{cond}}$.
We therefore use $k\ge3$ to isolate higher-order shape (tails, asymmetry, peaky structure), complementing edge control without redundancy.

\section{Full Theoretical Results and Proofs} \label{app:proofs}
This appendix provides detailed derivations and proofs for the theoretical claims made in Section~\ref{sec:theory}. We adopt the notation from the main text.

\subsection{Gradient of the Condition Proxy}
To prove our main results, we first require the explicit form of the gradient for $\rho_{\text{cond}}(W)$.
We state the formula under simplicity of the extremal singular values; otherwise, subgradients exist and the identities hold almost everywhere~\citep{Lewis1996,Bhatia1997}.

\begin{lemma}[Gradient of $\rho_{\text{cond}}$]
\label{lem:grad_form_app}
Let $W=U\Sigma V^\top$ be the singular value decomposition of $W$. Assume the largest singular value $\sigma_{\max}(W)$ and smallest singular value $\sigma_{\min}(W)$ are simple. Let $(u_1, v_1)$ and $(u_r, v_r)$ be the corresponding pairs of left and right singular vectors. The gradient of the condition proxy is given by:
\[
\nabla_W \rho_{\mathrm{cond}}(W) = \frac{1}{\sigma_{\max}(W)} u_1 v_1^\top - \frac{\sigma_{\min}(W)}{\sigma_{\min}(W)^2 + \epsilon} \, u_r v_r^\top.
\]
Consequently, the squared Frobenius norm of the gradient is:
\[
\norm{\nabla_W \rho_{\mathrm{cond}}(W)}_F^2 = \frac{1}{\sigma_{\max}(W)^2} + \left(\frac{\sigma_{\min}(W)}{\sigma_{\min}(W)^2 + \epsilon}\right)^2.
\]
\end{lemma}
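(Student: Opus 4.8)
The plan is to reduce everything to the single classical fact that a \emph{simple} singular value is differentiable in $W$ with gradient equal to the associated rank-one outer product. Concretely, I would first establish that whenever $\sigma_i(W)$ is simple with left/right singular vectors $(u_i,v_i)$, one has $\nabla_W \sigma_i(W) = u_i v_i^\top$. The cleanest derivation starts from the variational identity $\sigma_i = u_i^\top W v_i$: perturbing $W \mapsto W + tH$ and differentiating, the contributions from the first-order changes in $u_i$ and $v_i$ vanish because the singular vectors are stationary—using $W v_i = \sigma_i u_i$ together with $\dot{u}_i^\top u_i = 0$ (and symmetrically $u_i^\top W = \sigma_i v_i^\top$ with $v_i^\top \dot{v}_i = 0$ from the unit-norm constraints)—leaving $\frac{d}{dt}\sigma_i = u_i^\top H v_i = \langle u_i v_i^\top, H\rangle_F$. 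Simplicity is exactly what guarantees the requisite differentiability and a locally smooth selection of $(u_i,v_i)$; this is the standard spectral-function perturbation result I would cite \citep{Lewis1996,Bhatia1997}.

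With this in hand, the gradient follows by the scalar chain rule applied termwise. For the first term, $\nabla_W \log\sigma_{\max} = \sigma_{\max}^{-1}\nabla_W\sigma_{\max} = \sigma_{\max}^{-1} u_1 v_1^\top$. For the second, writing $g(\sigma) = \tfrac12\log(\sigma^2+\epsilon)$ gives $g'(\sigma_{\min}) = \sigma_{\min}/(\sigma_{\min}^2+\epsilon)$, so $\nabla_W\,\tfrac12\log(\sigma_{\min}^2+\epsilon) = \frac{\sigma_{\min}}{\sigma_{\min}^2+\epsilon}\,u_r v_r^\top$. Subtracting yields the claimed formula. I would note that the outer products $u_1v_1^\top$ and $u_rv_r^\top$ are invariant under the joint sign flip $(u_i,v_i)\mapsto(-u_i,-v_i)$ that is the only remaining ambiguity for a simple singular value, so the expression is well-defined independently of SVD conventions.

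For the squared Frobenius norm, I would use that the two rank-one terms are orthonormal in the Frobenius inner product. Each satisfies $\norm{u_iv_i^\top}_F = \norm{u_i}_2\norm{v_i}_2 = 1$, and their cross term is $\langle u_1v_1^\top, u_rv_r^\top\rangle_F = (u_1^\top u_r)(v_1^\top v_r) = 0$, which vanishes because distinct SVD singular vectors are mutually orthogonal (here using $1\neq r$, i.e. $\sigma_{\max}\neq\sigma_{\min}$, which is forced by simultaneous simplicity of both extremes unless $W$ is a scalar multiple of an isometry). Expanding $\norm{a\,u_1v_1^\top - b\,u_rv_r^\top}_F^2 = a^2 + b^2$ with $a=\sigma_{\max}^{-1}$ and $b=\sigma_{\min}/(\sigma_{\min}^2+\epsilon)$ then gives the stated identity.

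The main obstacle—more a point requiring care than a genuine difficulty—is the rigorous justification of differentiability of the extremal singular values and the vanishing of the singular-vector-variation terms; this is precisely where the simplicity hypothesis is invoked, and where the non-simple case must be handed off to Clarke subgradients, as the paper's standing assumptions already anticipate. Everything downstream is routine chain rule together with the orthonormality of the rank-one sensitivity directions.
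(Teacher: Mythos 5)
Your proposal is correct and follows essentially the same route as the paper's proof: reduce to the rank-one gradient $\nabla_W\sigma_i = u_iv_i^\top$ for a simple singular value, apply the scalar chain rule to each log term, and use orthogonality of the singular-vector outer products to split the squared Frobenius norm. The only differences are cosmetic — you derive the rank-one gradient from the variational identity rather than citing it, and you flag the degenerate $1=r$ edge case, both of which are fine refinements but not a different argument.
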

\begin{proof}
The condition proxy is defined as $\rho_{\mathrm{cond}}(W) = \log\sigma_{\max}(W) - \frac{1}{2}\log(\sigma_{\min}(W)^2 + \epsilon)$. The gradient of a simple singular value $\sigma_i(W)$ with corresponding vectors $(u_i, v_i)$ is the rank-one matrix $\nabla_W \sigma_i(W) = u_i v_i^\top$. Applying the chain rule yields:
\begin{align*}
\nabla_W \log\sigma_{\max}(W) &= \frac{1}{\sigma_{\max}(W)} \nabla_W \sigma_{\max}(W) = \frac{1}{\sigma_{\max}(W)} u_1 v_1^\top, \\
\nabla_W \left(\tfrac{1}{2}\log(\sigma_{\min}(W)^2 + \epsilon)\right) &= \frac{1}{2} \frac{1}{\sigma_{\min}(W)^2 + \epsilon} \nabla_W(\sigma_{\min}(W)^2) \\
&= \frac{2\sigma_{\min}(W)}{2(\sigma_{\min}(W)^2 + \epsilon)} \nabla_W \sigma_{\min}(W) = \frac{\sigma_{\min}(W)}{\sigma_{\min}(W)^2 + \epsilon} u_r v_r^\top.
\end{align*}
Subtracting the second term from the first gives the gradient formula. For the Frobenius norm, we use the fact that singular vectors form orthonormal sets, meaning $\langle u_1v_1^\top, u_rv_r^\top \rangle_F = \Tr(v_1u_1^\top u_rv_r^\top) = (u_1^\top u_r)(v_1^\top v_r) = 0$ for $1 \neq r$. Thus, the squared norm is the sum of the squared norms of the two orthogonal rank-one components:
\[
\norm{\nabla_W \rho_{\mathrm{cond}}(W)}_F^2 = \norm{\tfrac{1}{\sigma_{\max}} u_1 v_1^\top}_F^2 + \norm{\tfrac{\sigma_{\min}}{\sigma_{\min}^2+\epsilon} u_r v_r^\top}_F^2 = \tfrac{1}{\sigma_{\max}^2}\norm{u_1 v_1^\top}_F^2 + \left(\tfrac{\sigma_{\min}}{\sigma_{\min}^2+\epsilon}\right)^2\norm{u_r v_r^\top}_F^2.
\]
Since $\norm{u_iv_i^\top}_F^2 = \norm{u_i}_2^2\norm{v_i}_2^2 = 1$, the result follows.
\end{proof}

\subsection{Proof of Theorem~\ref{thm:descent} and Corollary~\ref{cor:kappa_control}}
\begin{proof}[Proof of Theorem~\ref{thm:descent}]
We analyze the time derivative of $\rho_{\text{cond}}(W(t))$ along the gradient flow $\dot{W}(t) = -\eta\nabla_W \rho_{\text{cond}}(W(t))$. By the chain rule:
\[
\frac{d}{dt}\rho_{\text{cond}}(W(t)) = \left\langle \nabla_W \rho_{\text{cond}}(W(t)), \dot{W}(t) \right\rangle_F.
\]
Substituting the definition of the gradient flow:
\[
\frac{d}{dt}\rho_{\text{cond}}(W(t)) = \left\langle \nabla_W \rho_{\text{cond}}(W(t)), -\eta\nabla_W \rho_{\text{cond}}(W(t)) \right\rangle_F = -\eta \norm{\nabla_W \rho_{\text{cond}}(W(t))}_F^2.
\]
Since the squared Frobenius norm is always non-negative, we have $\frac{d}{dt}\rho_{\text{cond}}(W(t)) \le 0$. The descent is strict whenever $\nabla\rho_{\text{cond}}\neq 0$ (the generic case).
\end{proof}

\begin{proof}[Proof of Corollary~\ref{cor:kappa_control}]
The log-condition number is $\log\kappa(W) = \log\sigma_{\max}(W) - \log\sigma_{\min}(W)$. The condition proxy is $\rho_{\text{cond}}(W) = \log\sigma_{\max}(W) - \frac{1}{2}\log(\sigma_{\min}(W)^2 + \epsilon)$. We can write:
\begin{align*}
\log\kappa(W) &= \left(\log\sigma_{\max} - \tfrac{1}{2}\log(\sigma_{\min}^2+\epsilon)\right) + \tfrac{1}{2}\log(\sigma_{\min}^2+\epsilon) - \log\sigma_{\min} \\
&= \rho_{\text{cond}}(W) + \tfrac{1}{2}\log(\sigma_{\min}^2+\epsilon) - \tfrac{1}{2}\log(\sigma_{\min}^2) \\
&= \rho_{\text{cond}}(W) + \tfrac{1}{2}\log\left(\frac{\sigma_{\min}^2+\epsilon}{\sigma_{\min}^2}\right) \\
&= \rho_{\text{cond}}(W) + \tfrac{1}{2}\log\left(1 + \frac{\epsilon}{\sigma_{\min}(W)^2}\right).
\end{align*}
This is the stated identity. Since $\rho_{\text{cond}}(W)$ decreases monotonically under the flow (Theorem~\ref{thm:descent}), $\log\kappa(W)$ must also follow a non-increasing trend, perturbed only by the additive term which is positive and depends on the ratio $\epsilon/\sigma_{\min}^2$.
\end{proof}

\subsection{Proof of Lemma~\ref{lem:moment_grad} (Moment Gradient Bounds)}
\begin{proof}
The moment penalty is $\rho_{\text{moment}}(W) = \sum_{k=3}^{K} w_k s_k(W)^2$. Its gradient is $\nabla_W \rho_{\text{moment}} = \sum_{k=3}^{K} 2w_k s_k \nabla_W s_k$. We focus on bounding $\norm{\nabla_W s_k}_F$. Recall $s_k = \frac{1}{n}\Tr(T_k(\Ghat))$ where $\Ghat=(G-cI)/d$ and $G=W^\top W$.
Using the chain rule for matrix derivatives: $\nabla_W s_k = 2W \nabla_G s_k$, and $\nabla_G s_k = \frac{\partial s_k}{\partial \Ghat} \frac{\partial \Ghat}{\partial G}$.
The main term is $\frac{\partial \Tr(T_k(\Ghat))}{\partial \Ghat} = T_k'(\Ghat)$; by $T_k'(x)=k\,U_{k-1}(x)$ and $\sup_{x\in[-1,1]}|U_{k-1}(x)|=k$, we have $\|T_k'(\cdot)\|_\infty\le k^2$~\citep[Thm.~1.2]{Rivlin1990}.
Thus the dominant part of the gradient is
\[
\nabla_W s_k \approx \frac{2}{nd}\,W\, T_k'(\Ghat).
\]
Taking the Frobenius norm and using $\|AB\|_F \le \|A\|_2\|B\|_F$ gives
\[
\norm{\nabla_W s_k}_F \lesssim \frac{2}{nd} \norm{W}_2 \norm{T_k'(\Ghat)}_F
\le \frac{2}{nd} \norm{W}_2 \sqrt{n}\,\|T_k'(\Ghat)\|_2
\le \frac{2 k^2}{\sqrt{n} d} \norm{W}_2.
\]
Under the spread assumption, $d = \Theta(\|W\|_2^2)$, yielding $\|\nabla_W s_k\|_F=\mathcal{O}(k^2/\|W\|_2)$. Summing over $k=3,\dots,K$ and absorbing constants gives the stated bound; terms from $\nabla c,\nabla d$ scale as $d^{-2}$ and become $\mathcal{O}(\|W\|_2^{-3})$ after multiplying by $W$. If the spread is negligible, $d=\epsilon$ and the alternative bound follows.
\end{proof}

\subsection{Proof of Proposition~\ref{prop:invariance_main} (Orthogonal Invariance)}
\begin{proof}
Let $Q, R$ be orthogonal matrices.
\begin{enumerate}
    \item \textbf{Condition Proxy $\rho_{\text{cond}}$:} The singular values of a matrix $W$ are defined from the eigenvalues of $W^\top W$. The singular values of $QWR$ are defined from the eigenvalues of $(QWR)^\top(QWR) = R^\top W^\top Q^\top Q W R = R^\top(W^\top W)R$. Since $W^\top W$ and $R^\top(W^\top W)R$ are related by an orthogonal similarity transformation, they have the same eigenvalues. Therefore, $W$ and $QWR$ have the same singular values. Since $\rho_{\text{cond}}(W)$ depends only on $\sigma_{\max}(W)$ and $\sigma_{\min}(W)$, it follows that $\rho_{\text{cond}}(QWR)=\rho_{\text{cond}}(W)$.
    \item \textbf{Moment Proxy $\rho_{\text{moment}}$:} Let $G_W = W^\top W$ and $G_{QWR} = (QWR)^\top(QWR) = R^\top G_W R$. As shown above, $G_W$ and $G_{QWR}$ have the same set of eigenvalues.
    The affine normalization constants $c = \frac{1}{2}(\lambda_{\max}+\lambda_{\min})$ and $d = \max\{\frac{1}{2}(\lambda_{\max}-\lambda_{\min}),\epsilon\}$ depend only on the extremal eigenvalues, and are thus identical for $G_W$ and $G_{QWR}$. Let's call them $c$ and $d$.
    The normalized Gram matrices are $\Ghat_W = (G_W - cI)/d$ and $\Ghat_{QWR} = (G_{QWR} - cI)/d = (R^\top G_W R - cR^\top I R)/d = R^\top (G_W - cI) R / d = R^\top \Ghat_W R$.
    By the functional calculus for matrix polynomials~\citep[Chap.~1]{Higham2008}, $T_k(R^\top \Ghat_W R) = R^\top T_k(\Ghat_W) R$.
    Using the cyclic property of the trace,
    \[
    s_k(QWR) = \frac{1}{n}\Tr(R^\top T_k(\Ghat_W) R) = \frac{1}{n}\Tr(R R^\top T_k(\Ghat_W)) = \frac{1}{n}\Tr(T_k(\Ghat_W)) = s_k(W).
    \]
    Since the moments $s_k$ are invariant, the penalty $\rho_{\text{moment}}(W) = \sum_{k=3}^K w_k s_k(W)^2$ is also invariant.
\end{enumerate}
This completes the proof that the entire \cmr penalty is orthogonally invariant.
\end{proof}
\section{Comparison with Standard Regularizers}
\label{app:standard_baselines}

To demonstrate the general efficacy of \cmr beyond the extreme "$\kappa$-stress" scenario presented in the main text, we conduct a comparative study against standard regularization techniques: L2 regularization and Spectral Norm (SN) regularization. These experiments are performed in a standard training setup, using a 15-layer MLP with \texttt{tanh} activations, initialized with Glorot uniform weights (not the adversarial scaling). Here $\kappa(G)$ denotes the Gram-matrix condition number; since $G=W^\top W$, $\kappa(G)=\kappa(W)^2$ (up to the small numerical $\epsilon$ used for stability). Trends mirror those in the main text.

\begin{figure}[H]
    \centering
    \includegraphics[width=0.6\linewidth]{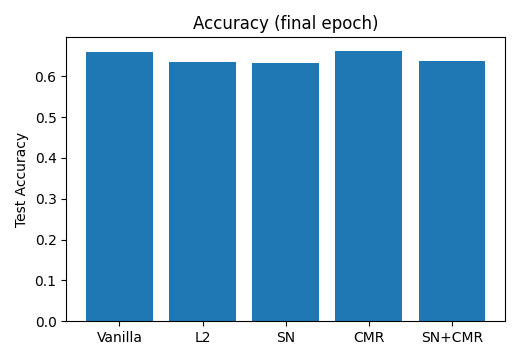} 
    \caption{\textbf{Final Test Accuracy Across Baselines.} This bar chart illustrates the final test accuracy achieved by different regularization methods after 20 training epochs. \cmr achieves competitive accuracy, slightly outperforming vanilla and matching or exceeding L2 and SN. This indicates that the spectral benefits of \cmr are gained without compromising the model's primary task performance.}
    \label{fig:acc_comparison_appendix}
\end{figure}

\paragraph{Analysis of Final Accuracy.}
Figure~\ref{fig:acc_comparison_appendix} shows that \cmr, whether applied alone or in conjunction with Spectral Norm (SN+CMR), yields a final test accuracy that is comparable to or slightly better than the vanilla baseline, L2, and SN regularization. This is a crucial finding, as it demonstrates that \cmr successfully improves the spectral properties of the network without incurring any performance penalty on the primary classification task. The consistent performance across these methods suggests that in this non-adversarial setting, accuracy alone may not fully reflect the underlying health of the optimization.

\vspace{1cm}

\begin{figure}[H]
    \centering
    \includegraphics[width=0.8\linewidth]{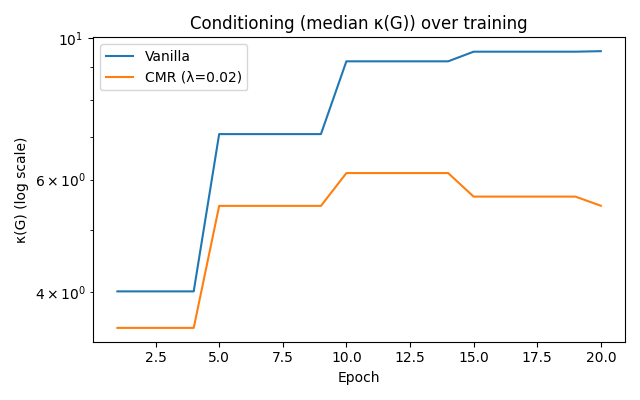} 
    \caption{\textbf{Evolution of Median Layer Condition Number ($\kappa(G)$) Over Training.} The median condition number of the Gram matrices for each layer is plotted over 20 epochs on a logarithmic scale. The vanilla model's conditioning rapidly deteriorates, increasing significantly after epoch 4. In contrast, \cmr effectively stabilizes and reduces the median $\kappa(G)$, maintaining a consistently lower (i.e., better) value throughout training. This clearly illustrates \cmr's ability to maintain well-conditioned layer operations.}
    \label{fig:median_kappa_appendix}
\end{figure}

\paragraph{Analysis of Median Conditioning.}
Figure~\ref{fig:median_kappa_appendix} provides a deeper insight into the spectral dynamics. The vanilla model's median layer condition number (blue line) exhibits a sharp increase after approximately 4 epochs, indicating a significant worsening of spectral properties during training. Conversely, \cmr (orange line) consistently maintains a much lower and more stable median condition number. This direct control over the median $\kappa(G)$ confirms \cmr's role in guiding the optimization towards well-conditioned parameter spaces, preventing the accumulation of ill-conditioning that can hinder training stability and efficiency.

\vspace{1cm} 

\begin{figure}[H]
    \centering
    \includegraphics[width=0.8\linewidth]{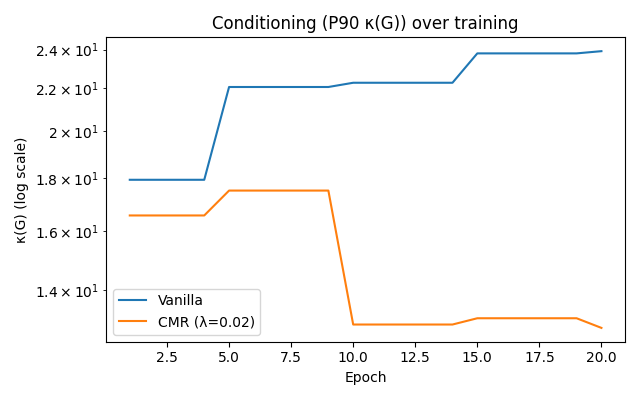} 
    \caption{\textbf{Evolution of 90th-Percentile Layer Condition Number ($\kappa(G)$) Over Training.} This plot, on a logarithmic scale, shows the 90th-percentile condition number, highlighting the behavior of the most ill-conditioned layers. Similar to the median, the vanilla baseline experiences a substantial increase in its P90 $\kappa(G)$, indicating that a significant portion of its layers become severely ill-conditioned. \cmr, however, effectively mitigates this rise, keeping the P90 $\kappa(G)$ remarkably stable and low, demonstrating robust control over even the "worst-case" layers in the network.}
    \label{fig:p90_kappa_appendix}
\end{figure}

\paragraph{Analysis of 90th-Percentile Conditioning.}
Figure~\ref{fig:p90_kappa_appendix} further reinforces these findings by focusing on the 90th-percentile condition number, which is a strong indicator of the most problematic layers within the network. The vanilla model again shows a dramatic increase, meaning that not just the average layer, but a significant fraction of its layers become severely ill-conditioned. In stark contrast, \cmr effectively suppresses this increase, maintaining a remarkably stable and low P90 $\kappa(G)$. This demonstrates that \cmr's regulatory effect is comprehensive, preventing critical degradation in conditioning even for the layers that are most prone to becoming ill-conditioned, thereby ensuring overall network stability.

\end{document}